\newtheorem{lemma}{Lemma}
\newtheorem{definition}{Definition}
\newtheorem{proposition}{Proposition}
\newtheorem{condition}{Condition}
\newcommand{\rR}{I\!\!R}
\DeclareMathOperator{\eE}{\mathbb{E}}
\DeclareMathOperator{\diag}{diag}
\def\numofproj{{P}}
\title{Necessary and Sufficient Conditions for Novel Word Detection in Separable Topic Models }
\author{
Weicong Ding , ~Prakash Ishwar, ~Mohammad H.Rohban, ~Venkatesh Saligrama 
\\
Dept. of Electrical and Computer Engineering\\
Boston University\\
\texttt{$\lbrace$dingwc,pi,mhrohban,srv$\rbrace$@bu.edu} \\
}
\begin{document}

\maketitle

\vspace{-5ex}
\begin{abstract}
The simplicial condition and other stronger conditions that imply it
have recently played a central role in developing polynomial
time algorithms with provable asymptotic consistency and sample
complexity guarantees for topic estimation in {\it separable} topic models
. Of these algorithms, those that rely solely on the simplicial
condition are impractical while the practical ones need stronger
conditions. In this paper, we demonstrate, for the first time, that
the simplicial condition is a fundamental, algorithm-independent,
information-theoretic necessary condition for consistent separable
topic estimation. Furthermore, under solely the simplicial condition,
we present a practical quadratic-complexity algorithm based on random
projections which consistently detects all novel words of all topics
using only up to second-order empirical word moments. This algorithm
is amenable to distributed implementation making it attractive for
``big-data'' scenarios involving a network of large distributed
databases.
%
%
%
%
%
\end{abstract}

\vspace{-3ex}
\section{Introduction}
\vspace{-2ex}
A series of powerful practical algorithms for probabilistic topic
modeling have emerged in the past decade since the seminal work on
Latent Dirichlet Allocation (LDA) \cite{LDA:ref}. This has propelled
topic modeling into a popular tool for learning latent semantic
structures in large datasets.
Formally, topic models consider a collection of $M$ documents, each
modeled as being generated by $N$ iid drawings of words from an
unknown $W\times 1$ document word-distribution vector over a
vocabulary of size $W$. By positing $K$ latent {\it topics} as
distribution vectors over the vocabulary, each document
word-distribution vector arises as a {\it probabilistic mixture} of
the $K$ topic vectors. The topic proportions for documents are assumed
to be sampled in an iid manner from some prior distribution such as
the Dirichlet distribution in LDA.

For future reference, let $\bm\beta$ denote the unknown $W\times K$
topic matrix whose columns are the $K$ latent topics, $\bm\theta$ the
$K \times M$ probabilistic topic-weight matrix whose columns are topic
proportions of the $M$ documents, and let $\mathbf{X}$ denote the
$W\times M$ {\it empirical} word-by-document matrix whose columns are
word-frequency vectors of the $M$ documents.
Typically, $W \gg K$.

While the prevailing approach is to find a maximum likelihood fit of
$\mathbf{X}$ to the generative model through approximations or
heuristics, a recent trend has been to develop topic estimation
algorithms with provable guarantees under suitable additional
conditions \cite{ARORA:ref,Arora2:ref,DDP:ref, Anan13:ref}. Chief
among them is the so-called topic {\it separability} condition 
\cite{Donhunique:ref, ARORA:ref, DDP:ref}:
%
%
\begin{condition}({\it Topic separability}) \label{separable_condition}
A topic matrix ${\bm \beta}\in\rR^{W\times K}$ is separable if for
each topic $k$, there exists some word $i$ such that ${\bm
  \beta}_{i,k}>0$ and ${\bm\beta}_{i,l}=0$, $\forall ~ l\neq k$.
\end{condition}
The words that are unique to each topic, referred to as ``novel
words'', are key to the recovery of topics. However, as implicitly
suggested in \cite{Donhunique:ref}, and illustrated in
Fig.~\ref{counterexample}, separability alone does not guarantee the
uniqueness of recovery.
\begin{figure}[!hbt]
\label{counterexample}
\centering
\begin{tabular}{ccccc}
$
\begin{pmatrix}
1&0&0\\
0&1&0\\
0&0&1\\
0&0&1\\
&\ldots &
\end{pmatrix}
$
& 
$ \begin{pmatrix}
\leftarrow & \bm{\theta}_1& \rightarrow \\
\leftarrow & \bm{\theta}_2 & \rightarrow \\
\leftarrow & 0.5\bm{\theta}_1 +0.5\bm{\theta}_2& \rightarrow 
\end{pmatrix} $
&
=
&
$ \begin{pmatrix}
1&0&0\\
0&1&0\\
0&0&1\\
0.5&0.5&0\\
&\ldots &
\end{pmatrix} $
&
$ \begin{pmatrix}
\leftarrow & \bm{\theta}_1& \rightarrow \\
\leftarrow & \bm{\theta}_2 & \rightarrow \\
\leftarrow & 0.5\bm{\theta}_1 +0.5\bm{\theta}_2& \rightarrow 
\end{pmatrix} $ \\
\\
$\bm{\beta}_1 $& $\bm{\theta}$ & & $\bm{\beta}_2 $ & $\bm{\theta}$
\end{tabular}
\vglue -2ex
\caption{Example showing that separability {\it alone} does not
  guarantee uniqueness of decomposition. Here, $\bm \beta_1 \bm
  \theta = \bm \beta_2 \bm \theta$ are two valid decompositions
  where $\bm\beta_1$ and $\bm\beta_2$ are both separable.}
\end{figure}
Therefore to develop algorithms with provable (asymptotic) consistency
guarantees ($N$ fixed, $M \rightarrow\infty$), a number of recent
papers have imposed additional conditions on the prior distribution of
the columns of $\bm\theta$. This is summarized in
Table~\ref{relatedworks} where $\mathbf{a}$ and $\mathbf{R}$ are,
respectively, the expectation and correlation matrix of the prior on
the columns of $\bm{\theta}$ and $\mathbf{R}^{\prime} :=
\diag(\mathbf{a})^{-1}\mathbf{R}\diag(\mathbf{a})^{-1}$ is the
``normalized'' correlation matrix. Without loss of generality we can
assume that each component of $\mathbf{a}$ is strictly positive.
\begin{table}[t]
\vspace*{-2ex}
\caption{Summary of related work on separabile topic models.}
\label{relatedworks}
\centering
\begin{tabular}{|c|c|c|c|c|}
\hline 
{\bf Reference} & {\bf Condition} & {\bf
  Consistency ($N$ fixed,} & {\bf Sample} & {\bf Computational} \\
& {\bf on} $\mathbf{R}^{\prime}$ & $M \rightarrow\infty$) {\bf
  proved?} & {\bf complexity?} & {\bf complexity} \\
\hline 
\cite{ARORA:ref} & Simplicial & Yes & Yes & Poly., impractical \\
\hline 
\cite{recht2012factoring} & Simplicial & No & No & Poly., practical \\
\hline 
\cite{Kumar13:ref} & Simplicial & No & No & Poly., practical \\ 
\hline
\cite{Arora2:ref} & Full-rank & Yes & Yes & Poly., practical  \\
\hline
\cite{DDP:ref} & ``Diagonal dominance'' & Yes & Yes & Poly., practical
\\
\hline
\end{tabular}
\vglue -2ex
\end{table}
Among these additional conditions, the simplicial condition
(cf.~Sec.~\ref{sec:sim}) on 
$\mathbf{R}^{\prime}$ is the weakest sufficient condition for
consistent topic recovery that is available in the literature.
However, the existing approaches either lack statistical guarantees or
are computationally impractical. Algorithms with {\it both}
statistical and computational merits have been developed by imposing
{\it stronger} conditions as in \cite{Arora2:ref, DDP:ref}. Hence the
natural questions that arise in this context are:
\begin{enumerate}
\vspace*{-1ex}
\item[(a)] What are the {\bf necessary and sufficient} conditions for
  separable topic recovery?
\item[(b)] Do there exist algorithms that are consistent,
  statistically efficient, and computationally practical under these
  conditions?
\vspace*{-1ex}
\end{enumerate}
In this paper, we first show that the simplicial condition on the
normalized correlation matrix $\mathbf{R}^{\prime}$ is an
algorithm-independent, information-theoretic necessary condition for
consistently detecting novel words in separable topic models. The key
insight behind this result is that if $\mathbf{R}^{\prime}$ is
non-simplicial, we can construct two distinct separable topic models
with different sets of novel words which induce the same distribution
on the observations $\mathbf{X}$.
In Sec.~\ref{sec:rp}, we answer the second question in the affirmative
by outlining a random projection based algorithm and providing its
statistical and computational complexity. Due to space limitations,
the details of this novel algorithm and the proofs of the claimed
sample and computational complexity will appear elsewhere.
\section{Simplicial Condition}
\vspace{-2ex}
\label{sec:sim}
Similar to \cite{ARORA:ref}, we have
\begin{definition}
\label{def:sim}
A matrix $\mathbf{A}$ is simplicial if $~\exists \gamma >0$ such that each row of
$\mathbf{A}$ is at a Euclidean distance of at least $\gamma$
from the convex hull of the remaining rows.
\end{definition}
For estimating a separable topic matrix $\bm{\beta}$, the simplicial
condition is imposed on the normalized second order moment
$\mathbf{R}^{\prime}$. More precisely:
\begin{condition} \label{Simpicial_condition} (Simplicial Condition)
The topic model is simplicial if ${\mathbf R}^{\prime}$ is
simplicial, i.e., $\exists \gamma > 0$ s.t. every row  of ${\mathbf R}^{\prime}
$ is at a distance of at least $\gamma$ from the convex hull of
the remaining rows of ${\mathbf R}^{\prime}$.
\end{condition}
Algorithms with provable performance guarantees that exploit the
separability condition typically consist of two steps: (i) novel word
detection and (ii) topic matrix estimation. 
We will only focus on the detection of all novel words since the
detection problem is in itself important in many applications, e.g.,
endmember detection in hyperspectral, genetic, and metabolic datasets,
and also because the second estimation step is relatively easier once
novel words are correctly identified.
%
Our first main result is contained in the following lemma:
\begin{lemma} (Simplicial condition is necessary)
Let ${\bm \beta}$ be separable and $W > K$. If there exists an
algorithm that can consistently identify all the novel words of all
the topics,
then its normalized second order moment $\mathbf{R}^{\prime}$ is
simplicial.
\end{lemma}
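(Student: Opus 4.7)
I would argue by contrapositive: assuming $\mathbf{R}'$ is \emph{not} simplicial, I will construct two separable topic models, with different novel-word sets but identical observation distributions on $\mathbf{X}$ (extending the construction sketched in Fig.~\ref{counterexample}), thereby ruling out any algorithm that consistently identifies all novel words.

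The first step is to translate non-simpliciality of $\mathbf{R}'$ into an almost-sure linear relation on $\bm\theta$. Non-simpliciality means some row $k_0$ lies in the convex hull of the remaining rows: $\mathbf{R}'_{k_0,:} = \sum_{l\neq k_0} c_l \mathbf{R}'_{l,:}$ with $c_l \geq 0$ and $\sum_{l\neq k_0} c_l = 1$. Using $\mathbf{R}'_{k,l} = \mathbf{R}_{k,l}/(a_k a_l)$ and setting $d_l := c_l a_{k_0}/a_l$ for $l\neq k_0$ rewrites this as $\mathbf{R}_{k_0,:} = \sum_{l\neq k_0} d_l \mathbf{R}_{l,:}$. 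Defining $Y := \theta_{k_0} - \sum_{l\neq k_0} d_l \theta_l$, these moment identities yield $\eE[Y\theta_j]=0$ for every $j$, whence $\eE[Y^2] = \eE[Y\theta_{k_0}] - \sum_{l\neq k_0} d_l \eE[Y\theta_l] = 0$. Therefore $\theta_{k_0} = \sum_{l\neq k_0} d_l \theta_l$ almost surely under the prior on $\bm\theta$.

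Next I would use this a.s.\ relation to perturb $\bm\beta$ without changing the observations. Let $\mathbf{v}\in\rR^K$ have $v_{k_0}=-1$ and $v_l = d_l$ for $l\neq k_0$, so $\mathbf{v}^T\bm\theta = 0$ a.s. Then $\bm\beta^{(2)} := \bm\beta + \mathbf{u}\mathbf{v}^T$ satisfies $\bm\beta^{(2)}\bm\theta = \bm\beta\bm\theta$ a.s.\ for every $\mathbf{u}\in\rR^W$, so the two models induce the same distribution on $\mathbf{X}$ under the common prior. To flip the novel-word status of some word, I would pick a novel word $w^*$ of topic $k_0$ with $\bm\beta_{w^*,k_0}=b>0$ and set $u_{w^*}=b$; this zeros out $\bm\beta^{(2)}_{w^*,k_0}$ and turns row $w^*$ into $(bd_1,\ldots,0,\ldots,bd_K)$, so $w^*$ is no longer novel to topic $k_0$ in $\bm\beta^{(2)}$. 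The compensating mass $-b$ is spread across rows of suitable non-novel words so that $\sum_i u_i=0$ (preserving column sums) and $\bm\beta^{(2)}\geq 0$ entrywise; the hypothesis $W>K$ guarantees enough non-novel rows to absorb this compensation.

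The main obstacle I anticipate is ensuring that $\bm\beta^{(2)}$ remains \emph{separable} (each topic retains at least one novel word) while genuinely differing in novel-word set from $\bm\beta$. In particular, topic $k_0$ might lose its only novel word after setting $u_{w^*}=b$. I would handle this either by first reducing to the case in which topic $k_0$ has a second novel word in $\bm\beta$, or by using a slightly richer two-row perturbation that simultaneously promotes a previously non-novel word to novel status for topic $k_0$ in $\bm\beta^{(2)}$. Once this is arranged, both models satisfy Condition~\ref{separable_condition}, have distinct novel-word sets, and are observationally indistinguishable, so no consistent novel-word detector can exist---completing the contrapositive.
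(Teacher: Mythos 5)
Your first step---converting non-simpliciality of $\mathbf{R}'$ into the almost-sure relation $\theta_{k_0} = \sum_{l\neq k_0} d_l\theta_l$ via $\eE[Y^2]=0$---is correct and is essentially the same argument as the paper's (which phrases it as $\mathbf{e}^{\top}\mathbf{R}'\mathbf{e}=0$ with $\mathbf{e}=[-1,c_2,\ldots,c_K]^{\top}$). The gap is in the second half. Your plan perturbs the \emph{given} matrix as $\bm\beta^{(2)}=\bm\beta+\mathbf{u}\mathbf{v}^{\top}$ with $\sum_i u_i=0$, and the compensating negative entries of $\mathbf{u}$ must sit on rows $i$ satisfying $\bm\beta_{i,l}\geq |u_i|\,d_l$ for every $l$ in the support of $d$; such rows must be strictly positive in all those topics. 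The claim that ``$W>K$ guarantees enough non-novel rows to absorb this compensation'' is false: $W>K$ does not even guarantee that non-novel words exist (e.g.\ $\bm\beta$ may have several novel words per topic and nothing else), and even existing non-novel words may vanish on some topic in the support of $d$. Indeed, if every word of $\bm\beta$ is novel and at least two weights $c_l$ are positive, then any admissible $\mathbf{u}$ must be entrywise nonnegative, and $\sum_i u_i=0$ forces $\mathbf{u}=0$; so for such $\bm\beta$ no rank-one perturbation along $\mathbf{v}$ yields a second valid separable matrix at all, and your per-instance strategy cannot be completed. Your acknowledged repair for separability of topic $k_0$ has the same problem: promoting a previously non-novel word to novel status for $k_0$ within the constraint $\mathbf{u}\mathbf{v}^{\top}$ requires that word's off-$k_0$ entries be exactly proportional to $(d_l)_{l\neq k_0}$, which generally fails, and ``reducing to the case where $k_0$ has a second novel word'' is not a reduction you are entitled to for a fixed $\bm\beta$.

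The paper avoids all of this by not perturbing the given $\bm\beta$: it builds two fresh column-stochastic separable matrices $\bm\beta_1,\bm\beta_2$ that differ only in swapping two specially designed rows $\mathbf{b}_1=b[a_1^{-1},0,\ldots,0]$ and $\mathbf{b}_2=b[(1-\alpha)a_1^{-1},\alpha c_2a_2^{-1},\ldots,\alpha c_Ka_K^{-1}]$ (which satisfy $\mathbf{b}_1\bm\theta\stackrel{a.s.}{=}\mathbf{b}_2\bm\theta$ by the same a.s.\ relation), filling the remaining $W-2$ rows with an arbitrary separable matrix rescaled by $I_K-\diag(\mathbf{b}_1+\mathbf{b}_2)$; the free small parameter $b$ makes nonnegativity and column sums trivial, and the embedded separable block guarantees separability of both matrices. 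Since consistency must hold over separable models sharing the prior on $\bm\theta$, exhibiting this single indistinguishable pair with different novel-word sets already yields the contradiction. To salvage your write-up, you should switch to (or explicitly fall back on) a construction of this type rather than insisting on modifying the particular $\bm\beta$ in the hypothesis.
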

\begin{proof}
The proof is by contradiction. We will show that if
$\mathbf{R}^{\prime}$ is non-simplicial, we can construct two topic
matrices $\bm{\beta}_1$ and $\bm{\beta}_2$ whose sets of novel words
are not identical and yet $\mathbf{X}$ has the same distribution under
both models. This will imply the impossibility of consistent novel
word detection.

Suppose $\mathbf{R}^{\prime}$ is non-simplicial. Then we can assume,
without loss of generality, that its first row is within the convex
hull of the remaining rows, i.e., $\mathbf{R}_1^{\prime} =
\sum_{j=2}^{K}c_j \mathbf{R}_j^{\prime}$, where
$\mathbf{R}_j^{\prime}$ denotes the $j$-th row of
$\mathbf{R}^{\prime}$, and $c_2,\ldots,c_K \geq 0$,
$~\sum_{j=2}^{K}c_j =1$ are convex weights. Compactly,
$\mathbf{e}^{\top}\mathbf{R}^{\prime} \mathbf{e} = 0$ where
$\mathbf{e} := \left[ -1, c_2, \ldots, c_K \right]^{\top}$.
Recalling that $\mathbf{R}^{\prime} =
\diag(\mathbf{a})^{-1}\mathbf{R}\diag(\mathbf{a})^{-1}$, where
$\mathbf{a}$ is a positive vector and $\mathbf{R} = \eE(\bm{\theta}_i
     {\bm{\theta}_i}^{\top} )$ with $\bm{\theta}_i$ denoting any
     column of $\bm{\theta}$, we have
\begin{eqnarray*}
0 = \mathbf{e}^{\top}\mathbf{R}^{\prime} \mathbf{e} 
= (\diag(\mathbf{a})^{-1}{\mathbf{e}})^{\top} \eE(\bm{\theta}_i
{\bm{\theta}_i}^{\top}) (\diag(\mathbf{a})^{-1}{\mathbf{e}})
= \eE ( \Vert {\bm{\theta}_i}^{\top}
\diag(\mathbf{a})^{-1}{\mathbf{e}} \Vert^2),
\end{eqnarray*}
which implies that $\Vert {\bm{\theta}_i}^{\top}
\diag(\mathbf{a})^{-1}{\mathbf{e}} \Vert \stackrel{a.s.} = 0$.
%
%
From this it follows that if we define two non-negative row vectors
$\mathbf{b}_1 := b\left[ a_1^{-1}, 0,\ldots,0 \right]$ and
$\mathbf{b}_2 = b \left[(1-\alpha) a_1^{-1}, \alpha c_2
  a_2^{-1},\ldots, \alpha c_K a_K^{-1}\right]$, where $b > 0, 0 <
\alpha < 1$ are constants, then $\mathbf{b}_1 \bm{\theta}_i
\stackrel{a.s.} = \mathbf{b}_2 \bm{\theta}_i$.

Now we construct two separable topic matrices $\bm{\beta}_1$ and
$\bm{\beta}_2$ as follows. Let $\mathbf{b}_1$ be the first row and
$\mathbf{b}_2$ be the second in $\bm\beta_1$. Let $\mathbf{b}_2$ be
the first row and $\mathbf{b}_1$ the second in $\bm\beta_2$. Let
$\mathbf{B}\in\rR^{W-2 \times K}$ be a valid separable topic
matrix. Set the remaining $(W-2)$ rows of both $\bm{\beta}_1$ and
$\bm{\beta}_2$ to be $\mathbf{B}(I_K -
\diag(\mathbf{b}_1+\mathbf{b}_2))$. We can choose $b$ to be small
enough to ensure that each element of $(\mathbf{b}_1+\mathbf{b}_2)$ is
strictly less than $1$. This will ensure that $\bm{\beta}_1$ and
$\bm{\beta}_2$ are column-stochastic and therefore valid separable
topic matrices. Observe that $\mathbf{b}_2$ has at lease two non-zero
components. Thus, word 1 is novel for $\bm\beta_1$ but non-novel for
$\bm\beta_2$.

By construction, $\bm{\beta}_1\bm{\theta} \stackrel{a.s.} =
\bm{\beta}_2 \bm{\theta}$, i.e., the distribution of $\mathbf{X}$
conditioned on $\bm{\theta}$ is the same for both models.
Marginalizing over $\bm{\theta}$, the distribution of $\mathbf{X}$
under each topic matrix is the same. Thus no algorithm can distinguish
between $\bm{\beta}_1$ and $\bm{\beta}_2$ based on $\mathbf{X}$. 
\end{proof} 

Our second key result is the sufficiency of the simplicial condition
for novel word detection:
\begin{lemma}
\label{lem:sufficient}
Assume that topic matrix ${\bm\beta}$ is separable. If
$~\mathbf{R}^{\prime}$ is simplicial, then there exists an
algorithm whose running time is at most quadratic in $W, M, K, N$,
that only makes use of empirical word co-occurrences and consistently
recovers the set of all novel words for $K$ topics as $M\rightarrow
\infty$.
\end{lemma}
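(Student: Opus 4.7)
The plan is to recast novel-word detection as an extreme-point identification problem on the rows of a normalized second-order word co-occurrence matrix, and then to isolate those extreme points using random linear projections of their empirical estimates.

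\emph{Geometric reformulation.} First, I would start from the population word co-occurrence matrix $\mathbf{E}:=\bm{\beta}\mathbf{R}\bm{\beta}^{\top}$ and its row-normalized counterpart $\widetilde{\mathbf{E}}_{i,:}:=\mathbf{E}_{i,:}/\sum_{j}\mathbf{E}_{i,j}$. Using $\mathbf{R}\mathbf{1}=\mathbf{a}$ together with column-stochasticity of $\bm{\beta}$, a direct calculation will yield $\widetilde{\mathbf{E}}_{i,:}=\sum_{k=1}^{K}\tilde{\bm{\beta}}_{i,k}\mathbf{v}_{k}$, where $\tilde{\bm{\beta}}_{i,k}:=\bm{\beta}_{i,k}a_{k}/\sum_{l}\bm{\beta}_{i,l}a_{l}$ are non-negative weights summing to one and the reference rows $\mathbf{v}_{k}$ are the rows of $\mathbf{R}^{\prime}\diag(\mathbf{a})\bm{\beta}^{\top}$. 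Hence $\widetilde{\mathbf{E}}_{i,:}=\mathbf{v}_{k}$ exactly when word $i$ is novel to topic $k$, while every non-novel row is a strict convex combination of two or more $\mathbf{v}_{k}$'s. Separability makes $\diag(\mathbf{a})\bm{\beta}^{\top}$ full row rank, so the simplicial condition on $\mathbf{R}^{\prime}$ carries over, with a possibly smaller constant $\gamma^{\prime}>0$, to the statement that every $\mathbf{v}_{k}$ sits at Euclidean distance at least $\gamma^{\prime}$ from $\conv\{\mathbf{v}_{l}:l\neq k\}$. The novel words are therefore precisely those indices $i$ for which $\widetilde{\mathbf{E}}_{i,:}$ coincides with one of the $K$ extreme points of $\conv\{\widetilde{\mathbf{E}}_{i',:}:i'\in[W]\}$.

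\emph{Detection via random projections.} Next I would draw $\numofproj$ iid directions $\mathbf{u}_{t}\sim\unif(S^{W-1})$ and set $i_{t}:=\argmax_{i\in[W]}\widetilde{\mathbf{E}}_{i,:}\mathbf{u}_{t}$. Because a linear maximum over a convex hull is attained at an extreme point, every $i_{t}$ is novel; and because each $\mathbf{v}_{k}$ is the unique maximizer of this linear functional over an open cone of unit directions, whose angular measure is bounded below in terms of $\gamma^{\prime}$ and $K$, a polynomial-in-$K$ choice of $\numofproj$ suffices, by a coupon-collector argument, to hit a representative of every topic's novel-word class with high probability. A subsequent single pass that assigns each $i$ to the nearest recovered reference row against a threshold of $\gamma^{\prime}/2$ then returns the entire novel-word set.

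\emph{Consistency, complexity, and main obstacle.} Finally I would replace $\mathbf{E}$ by the empirical word co-occurrence matrix $\widehat{\mathbf{E}}$ formed from second-order word moments of $\mathbf{X}$; for any fixed $N\geq 2$, $\widehat{\mathbf{E}}$ converges entry-wise almost surely to $\mathbf{E}$ as $M\to\infty$ by the law of large numbers, and a perturbation argument exploiting the $\gamma^{\prime}$ gap will show that the empirical argmax along each $\mathbf{u}_{t}$ and the nearest-neighbor sweep agree with their population counterparts with probability tending to one. Computationally, forming $\widehat{\mathbf{E}}$ costs $O(MN^{2})$, row-normalization $O(W^{2})$, and each random projection $O(W^{2})$, so for a polynomial $\numofproj$ the total running time is at most quadratic in each of $W,M,K,N$. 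The hard part will be the uniform perturbation analysis that forces empirical deviations to remain small enough, simultaneously over all $W$ rows and all $\numofproj$ projections, that the $\gamma^{\prime}$ gap still cleanly separates each $\mathbf{v}_{k}$ from its competitors; without that gap, near-ties among competing rows become unresolvable, which is precisely why the simplicial condition is indispensable here.
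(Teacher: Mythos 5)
Your overall route is the same one the paper takes: reduce novel-word detection to identifying the extreme points among the rows of a row-normalized second-order word co-occurrence matrix, argue that the simplicial condition on $\mathbf{R}^{\prime}$ transfers (via the full row rank of $\diag(\mathbf{a})\bm{\beta}^{\top}$ guaranteed by separability) to a positive separation $\gamma^{\prime}$ among the topic vertices $\mathbf{v}_k$, and then locate those vertices with many isotropic random projections, finishing with a law-of-large-numbers plus perturbation argument. This is precisely the content of Algorithm~\ref{Alg:RP} and Lemma~\ref{thm:rp}, to which the paper's proof of Lemma~\ref{lem:sufficient} simply defers (the detailed concentration analysis is omitted in the paper); your convex-decomposition algebra $\widetilde{\mathbf{E}}_{i,:}=\sum_k\tilde{\bm\beta}_{i,k}\mathbf{v}_k$ and the transfer of the simplicial gap are correct. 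The differences are mostly organizational: the paper estimates the co-occurrence structure by splitting each document into two halves and row-normalizing the empirical word-document matrices (which is what makes the fixed-$N$ estimate asymptotically unbiased --- your ``second-order moments of $\mathbf{X}$'' need the analogous distinct-pair or split-sample construction), and it selects topics by frequency-of-maximum over the $P$ projections combined with a neighborhood-exclusion rule, rather than your coupon-collector coverage argument followed by a nearest-neighbor sweep.

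The one step that would genuinely fail as written is the final sweep with threshold $\gamma^{\prime}/2$. The simplicial constant controls the distance of each vertex $\mathbf{v}_k$ from $\conv\{\mathbf{v}_l: l\neq k\}$; it says nothing about how far the rows of \emph{non-novel} words lie from the vertices. A word with, say, $99\%$ of its (scaled) mass on topic $k$ has $\widetilde{\mathbf{E}}_{i,:}$ strictly inside the simplex but well within $\gamma^{\prime}/2$ of $\mathbf{v}_k$, and your sweep would declare it novel no matter how large $M$ is, so consistency fails for a perfectly valid separable, simplicial model. The repair is to threshold against a different margin: for any fixed model with finite $W$, no non-novel row can coincide with a vertex (that would place some $\mathbf{v}_k$ in the convex hull of the others, contradicting $\gamma^{\prime}>0$), so the minimum distance $d_0>0$ from non-novel rows to the vertices is a well-defined model constant, and either thresholding below $d_0$ (this is exactly the role of the assumed-known constant $d$ in Algorithm~\ref{Alg:RP}) or using a data-dependent threshold shrinking to zero slower than the estimation error restores the claim. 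The same caution applies to your assertion that every empirical argmax $i_t$ is novel: a near-novel word can beat the true vertex along a given direction at finite $M$, which is harmless for coverage but means the recovered ``reference rows'' are only approximate vertices --- another reason the classification margin must come from $d_0$ (or a vanishing threshold), not from $\gamma^{\prime}$.
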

This is a consequence of Lemma~\ref{thm:rp} in Sec.~\ref{sec:rp} where
an algorithm based on random projections is described that can attain
the claimed performance.

We conclude this section with two conditions that each imply the
simplical condition.
\begin{proposition}
\label{Simplicial_claims}
Let $\mathbf{R}^{\prime}$ be the normalized topic correlation matrix
matrix. Then,
(i) $\mathbf{R}^{\prime}$ is diagonal dominant, i.e., $~\forall i,j,
i\neq j$, $\mathbf{R}_{i,i}^{\prime} - \mathbf{R}_{i,j}^{\prime} >0$
$\Longrightarrow$ $\mathbf{R}^{\prime}$ is simplicial.
(ii) $\mathbf{R}^{\prime}$ is full rank $\Longrightarrow$
$\mathbf{R}^{\prime}$ is simplicial.
Furthermore, the reverse implications in (i) and (ii) do not hold in
general.
\end{proposition}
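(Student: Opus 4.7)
The plan is to prove both forward implications by contradiction, reducing each to the impossibility of some row of $\mathbf{R}^{\prime}$ lying in the convex hull of the remaining rows. Since a convex hull of finitely many points in $\rR^{K}$ is compact and there are only $K$ rows, ruling out containment will automatically yield a uniform positive distance $\gamma$ as required by Definition~\ref{def:sim}: one simply takes $\gamma$ to be the minimum of the $K$ strictly positive row-to-hull distances.

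For (i), suppose for contradiction that row $1$ (WLOG) admits a representation $\mathbf{R}_1^{\prime} = \sum_{j=2}^{K} c_j \mathbf{R}_j^{\prime}$ with $c_j\ge 0$ and $\sum_{j=2}^{K} c_j = 1$. Reading off the first coordinate and using the symmetry of $\mathbf{R}^{\prime}$ (inherited from $\mathbf{R} = \eE[\bm\theta_i \bm\theta_i^{\top}]$) gives $\mathbf{R}_{1,1}^{\prime} = \sum_{j=2}^{K} c_j \mathbf{R}_{j,1}^{\prime} = \sum_{j=2}^{K} c_j \mathbf{R}_{1,j}^{\prime}$. The diagonal-dominance hypothesis implies $\mathbf{R}_{1,j}^{\prime} < \mathbf{R}_{1,1}^{\prime}$ for every $j\neq 1$, so the right-hand side is a convex combination strictly less than $\mathbf{R}_{1,1}^{\prime}$, a contradiction.

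For (ii), the full-rank hypothesis forces the $K$ rows of $\mathbf{R}^{\prime}$ to be linearly independent. Any candidate convex representation $\mathbf{R}_i^{\prime} = \sum_{j \neq i} c_j \mathbf{R}_j^{\prime}$ is a nontrivial linear relation (the coefficient of $\mathbf{R}_i^{\prime}$ equals $1$), which directly contradicts independence. Combined with the compactness argument above, this completes both forward implications.

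For the reverse implications, the plan is to exhibit explicit low-dimensional $\mathbf{R}^{\prime}$ that are simplicial yet violate the respective stronger hypothesis. The underlying geometric ideas are that (a) three distinct points lying in a common line (or more generally $K$ affinely dependent points in general position within a proper subspace) can fail to be full rank while each point still lies strictly outside the convex hull of the others, and (b) one can arrange a symmetric positive semidefinite matrix with $\mathbf{R}_{i,i}^{\prime}<\mathbf{R}_{i,j}^{\prime}$ for some $i\neq j$ while keeping all rows distinct. The main obstacle I anticipate is the realizability check: verifying that each candidate matrix actually arises as $\diag(\mathbf{a})^{-1}\eE[\bm\theta_i\bm\theta_i^{\top}]\diag(\mathbf{a})^{-1}$ for some valid prior on the probability simplex, since this imposes a nontrivial quadratic identity $\sum_{k,l} a_k a_l \mathbf{R}_{k,l}^{\prime} = 1$ together with positivity constraints; by contrast, the forward parts reduce essentially to symmetry plus compactness and require no further technical machinery.
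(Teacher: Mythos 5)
Your proofs of the two forward implications are correct and complete: for (i), the symmetry of $\mathbf{R}^{\prime}$ (inherited from $\mathbf{R}=\eE[\bm\theta_i\bm\theta_i^{\top}]$ under the diagonal congruence) plus reading off the $i$-th coordinate of a putative convex representation gives the contradiction $\mathbf{R}^{\prime}_{1,1}<\mathbf{R}^{\prime}_{1,1}$; for (ii), a convex representation of a row is a nontrivial linear dependence among the rows of a square full-rank matrix; and your finite-minimum/compactness remark correctly upgrades ``no row lies in the hull of the others'' to the uniform $\gamma>0$ required by Definition~\ref{def:sim}. The paper omits its proof of this proposition, calling it straightforward, and these are indeed the intended arguments.

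The genuine gap is the ``furthermore'' clause, which is part of the statement but for which you only offer a plan, and the plan for the reverse of (ii) would fail as described. Three distinct collinear points can never each lie outside the convex hull of the others---the middle one is a convex combination of the outer two---so the ``common line'' picture cannot produce a simplicial, rank-deficient $\mathbf{R}^{\prime}$. Moreover, the realizability constraints are stronger than the single scalar identity you record: since $\bm\theta_i$ lies on the probability simplex, $\mathbf{R}\mathbf{1}=\mathbf{a}$ and hence $\mathbf{R}^{\prime}\mathbf{a}=\mathbf{1}$, so all rows of $\mathbf{R}^{\prime}$ lie on a hyperplane missing the origin, and the rank of $\mathbf{R}^{\prime}$ equals one plus the affine dimension of its rows. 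Consequently, for $K=3$ any rank-deficient $\mathbf{R}^{\prime}$ has collinear rows and is automatically non-simplicial; a counterexample needs $K\geq 4$, e.g.\ $\bm\theta_i$ uniform on $(\tfrac12,0,\tfrac12,0)$, $(\tfrac12,0,0,\tfrac12)$, $(0,\tfrac12,\tfrac12,0)$, $(0,\tfrac12,0,\tfrac12)$, which yields
\[
\mathbf{R}^{\prime}=\begin{pmatrix} 2&0&1&1\\ 0&2&1&1\\ 1&1&2&0\\ 1&1&0&2\end{pmatrix},
\]
of rank $3$ yet simplicial (each row has a zero in a coordinate where every other row is at least $1$). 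Similarly, for the reverse of (i), ``all rows distinct'' does not imply simplicial once $K\geq3$, and for $K=2$ simplicial actually forces strict diagonal dominance (failure of dominance is equivalent to $\var(\theta_1)=0$, i.e., identical rows), so this counterexample also needs $K\geq3$ and a realizability check; one that works is to put weight $(1-\delta)/2$ on each of $(0.4,0.6,0)$ and $(0.2,0,0.8)$ and a small weight $\delta$ on $(\tfrac13,\tfrac13,\tfrac13)$, which is full rank (hence simplicial by your own part (ii)) while $\mathbf{R}^{\prime}_{1,1}<\mathbf{R}^{\prime}_{1,2}$ for $\delta$ small. Without exhibiting and verifying such matrices as genuine normalized correlation matrices, the two non-implications remain unproved.
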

The proof of the above proposition is omitted due to space limitations
but is straightforward.  This demonstrates that the diagonal dominant
condition in \cite{DDP:ref} and the full-rank condition in
\cite{Arora2:ref} are both stronger than the simplicial condition.

\vspace{-2ex}
\section{Random Projection Algorithm}
\vspace{-1ex}
\label{sec:rp}
The pseudo-code of an algorithm that can achieve the performance
claimed in Lemma~\ref{lem:sufficient} is provided below
(cf.~Algorithm~\ref{Alg:RP}). Due to space limitations, we only
explain the high-level intuition which is geometric.
Let $\widetilde{\mathbf{X}}$ and $\widetilde{\mathbf{X}}^{\prime}$ be
obtained by first splitting each document into two independent copies
and then scaling the rows to make them row-stochastic.
The key idea is that if $\mathbf{R}^{\prime}$ is simplicial, then as
$M \rightarrow \infty$, with high probability, the rows of
$\widetilde{\mathbf{X}}^{\prime}\widetilde{\mathbf{X}}^{\top}$
corresponding to novel words will be extreme points of the convex hull
of {\it all} rows.
This suggests finding novel words by projecting the rows onto an
isotropically distributed random direction, $P$ times, and then
selecting the $K$ rows which maximize the projection value most
frequently.
%
%
%
%
%
\begin{algorithm}
\caption{Random Projection Algorithm for Novel Words Detection}
\label{Alg:RP}
\begin{algorithmic}[1]
\Require $\widetilde{\mathbf X}$, $\widetilde{\mathbf X}^{\prime}$,
$d$, $K$, $P$ \Comment{$d$ : some model constant; assumed known for
  simplicity}
\Ensure Set of novel words $\mathcal{I}$ 
\State $\mathbf{C} \leftarrow M \widetilde{\mathbf X}^{\prime}\widetilde{\mathbf X}^{\top}$
\State $\forall i$, $\mbox{Nbd}(i) \leftarrow \lbrace j: C_{i,i} -2C_{i,j}+C_{j,j} \geq d/2 \rbrace$,   \Comment{Exclude novel words of the same topic as $i$}
\For {$r=1,\ldots,P$} \Comment{Random Projections}
\State Sample ${\mathbf u}_r \sim \mathcal{N}(\mathbf{0},\mathbf{I}_W)$
	\State $\hat{p}_i^{(r)} \leftarrow \mathbb{I}\lbrace\forall j
        \in \mbox{Nbd}(i) : {\mathbf C}_i {\mathbf u}_r \geq {\mathbf
          C}_j {\mathbf u}_r \rbrace$ , $i=1,\ldots, W$ \Comment{The
          max. projected on $\mathbf{u}_r$}
\EndFor
\State $\hat{p}_i \leftarrow \frac{1}{P} \sum_{r = 1}^{{P}}
\hat{p}_i^{(r)}$, $i=1,\ldots, W$ \Comment{Freq. of being max.}
\State $k \leftarrow 1$, $\mathcal{I} \leftarrow$ $\arg\max_w \hat{p}_w$ and $i \leftarrow 2$
\While {$k \leq K$} \Comment{Extract top frequent maximums}
\State $j \leftarrow$ the index of the $i^{\text{th}}$ largest value
of $(\hat{p}_1, \ldots, \hat{p}_W)$
\If {$ j \in \bigcap_{l\in\mathcal{I}} \mbox{Nbd}(l) $ }
\State $\mathcal{I} \leftarrow \mathcal{I} \cup \{j\}$, $~~~k
\leftarrow k + 1$ \Comment{Not the novel words of the same topic as $i$}
\EndIf
\State $i \leftarrow i + 1$
\EndWhile
\end{algorithmic}
\end{algorithm}
We summarize the statistical and computational properties of this
algorithm in Lemma~\ref{thm:rp}:
\begin{lemma}\label{thm:rp}
Let topic matrix $\bm\beta$ be separable and $\mathbf{R}^{\prime}$ be
simplicial.
%
%
Then Algorithm~\ref{Alg:RP} will output all novel words of all $K$
topics consistently as $M \rightarrow \infty$ and $P\rightarrow
\infty$. Furthermore, $\forall \delta >0$, for
\begin{equation*}
M \geq \max \Biggl\{ c_1 \frac{\log(3W/\delta)}{ d^2 \phi^2 \eta^4}, ~
c_2\frac{W^2 \log (2W/q_{\wedge}) \log(3W/\delta)}{ \rho^2
  q_{\wedge}^2 \phi^2 \eta^4} \Biggr\} \mbox{~and~} \numofproj \geq
c_3 \frac{\log(3W/\delta)}{q_{\wedge}^2}
\end{equation*}
Algorithm~\ref{Alg:RP} fails with probability at most $\delta$, where
$c_1$ to $c_3$ are some absolute constants and $d, \phi, \eta$, and
$q_{\wedge}$ are constants that depend on model parameters $\bm\beta$,
$\mathbf{a}$, and $\mathbf{R}^{\prime}$.
Moreover, the running time of Algorithm \ref{Alg:RP} is
$\mathcal{O}(MN\numofproj + W\numofproj +K^2)$.
\end{lemma}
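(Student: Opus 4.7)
The plan is to first analyze the algorithm on the population quantity $\mathbf{C}^* := M\,\eE[\widetilde{\mathbf{X}}^{\prime}\widetilde{\mathbf{X}}^{\top}]$, and then transfer the analysis to the empirical matrix $\mathbf{C}$ via concentration. For the population analysis, I would use the independence of the two half-documents together with the generative model to show that each row of $\mathbf{C}^*$ is a convex combination of $K$ canonical rows determined by $\mathbf{R}^{\prime}$: a novel word of topic $k$ yields a positive multiple of the $k$-th canonical row, whereas a non-novel word yields a strict convex combination of these canonical rows. The simplicial gap $\gamma$ then implies that the $K$ canonical rows are the extreme points of $\conv\{\mathbf{C}^*_{1,:},\ldots,\mathbf{C}^*_{W,:}\}$, uniformly separated from the convex hull of the remainder by the constant $d$. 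A standard Gaussian-geometry argument then lower-bounds by $q_{\wedge} > 0$ the solid-angle measure of the normal cone at each such extreme point, so that for an isotropic direction $\mathbf{u}$, each novel word is the unique argmax of $\mathbf{C}^*_i \mathbf{u}$ with probability at least $q_{\wedge}$, while every non-novel word has argmax probability zero.

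Next, I would bound $\|\mathbf{C} - \mathbf{C}^*\|_{\infty}$ by a Bernstein/Hoeffding-type argument. Because $\widetilde{\mathbf{X}}^{\prime}$ is obtained by row-normalizing a split empirical matrix, carefully controlling the random row sums---together with a uniform lower bound on the population row masses, which injects the $W^2/q_{\wedge}^2$ and $\log(W/q_{\wedge})$ factors---yields the stated lower bound on $M$. The pairwise quantities $C_{i,i} - 2C_{i,j} + C_{j,j}$ then concentrate around $\|\mathbf{C}^*_{i,:}-\mathbf{C}^*_{j,:}\|^2$, so thresholding at $d/2$ makes $\mbox{Nbd}(i)$ exclude exactly the novel words of the same topic as $i$. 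Conditioned on this concentration event, the probability that row $i$ maximizes the projection over $\mbox{Nbd}(i)^c$ is within $O(\|\mathbf{C}-\mathbf{C}^*\|_{\infty})$ of its population value; a Hoeffding bound across the $P$ projections, union-bounded over the $W$ words, then gives the $P \gtrsim \log(W/\delta)/q_{\wedge}^2$ condition under which every $\hat{p}_i$ lies within $q_{\wedge}/2$ of its population counterpart. The greedy top-$K$ extraction with $\mbox{Nbd}$-exclusion then selects exactly one novel word per topic.

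For the running time, each projection can be computed as $\mathbf{C}\mathbf{u}_r = M\widetilde{\mathbf{X}}^{\prime}(\widetilde{\mathbf{X}}^{\top}\mathbf{u}_r)$, costing $O(MN)$ per projection because of the $O(MN)$ total sparsity of the word-by-document matrices; comparing the $W$ projected values across the $P$ rounds contributes $O(WP)$; and the greedy top-$K$ extraction with $\mbox{Nbd}$ lookups costs $O(K^2)$. Summing gives the claimed $\mathcal{O}(MN\numofproj + W\numofproj + K^2)$. The hardest part will be the concentration analysis: row-normalization in $\widetilde{\mathbf{X}}^{\prime}$ amplifies relative fluctuations for rare words, so tracking how the simplicial gap $\gamma$ propagates through this normalization into the effective constants $d$, $\phi$, $\eta$, and $q_{\wedge}$ (each depending nontrivially on $\bm{\beta}$, $\mathbf{a}$, and $\mathbf{R}^{\prime}$) is what drives the somewhat intricate form of the sample-complexity bound. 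Once that bookkeeping is in place, the remaining random-projection and top-$K$ arguments reduce to standard Gaussian-geometry and Hoeffding calculations.
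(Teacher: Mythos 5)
The paper itself never spells out a proof of this lemma---it explicitly defers the details ``elsewhere''---and only records the geometric intuition that, under the simplicial condition, the rows of $\widetilde{\mathbf X}^{\prime}\widetilde{\mathbf X}^{\top}$ corresponding to novel words become extreme points of the convex hull of all rows, which are then found by repeated isotropic random projections; your proposal follows exactly this route, filling in the expected population-vs-empirical decomposition, normal-cone/solid-angle bounds giving $q_{\wedge}$, concentration in $M$ and a Hoeffding bound over the $P$ projections, and the sparsity-based runtime accounting, all consistent with the stated bounds. So your plan is essentially the paper's approach (modulo the minor slip of writing the maximization over $\mbox{Nbd}(i)^{c}$ instead of $\mbox{Nbd}(i)$, and the implicit reliance on $d$ also separating non-novel words from the vertices so the greedy $\mbox{Nbd}$-exclusion cannot pick a near-vertex non-novel word first), and there is no deviation to compare against.
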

%
%
%
As summarized in Lemma~\ref{thm:rp}, the computational complexity of
Algorithm~\ref{Alg:RP} is linear in terms of $M, N, W$ and quadratic
in terms of $K$, which typically stays fixed. This is more efficient
than the best known provable algorithms in
\cite{ARORA:ref,Arora2:ref,DDP:ref}. The sample complexities for $M$
and $P$ are both polynomial in terms of $W, \log(\delta)$, and other
model parameters, which are comparable to the current state-of-the-art
approaches.
%
%
It turns out that Algorithm~\ref{Alg:RP} is also amenable to
distributed implementation since it only involves aggregating the
counts of rows that maximize projection values.

\vspace{-2ex}
\section{Discussion}
\vspace{-2ex}
The necessity of the simplicial condition we proved in this paper is
information-theoretic and algorithm-independent. It is also a
sufficient condition. Although widely used priors, e.g., Dirichlet,
satisfy the stronger full-rank and diagonal dominant conditions, in
certain types of datasets, e.g., Hyperspectral Imaging, these may not
hold~\cite{DDP:ref}.
This paper only focused on {\it detecting} distinct novel words of all
topics.
%
%
In general, the simplicial condition is not sufficient for
consistently {\it estimating} the topic matrix. It can be shown that
the full-rank condition is sufficient but not necessary.

\vspace*{-10pt}
\bibliographystyle{unsrt}
\footnotesize
\bibliography{SuffandNessCond_short}

\begin{thebibliography}{1}

\bibitem{LDA:ref}
D.~Blei, A.~Ng, and M.~Jordan.
\newblock {Latent dirichlet allocation}.
\newblock {\em {J. Mach. Learn. Res.}}, {3}:{993--1022}, {Mar.} {2003}.

\bibitem{ARORA:ref}
S.~Arora, R.~Ge, and A.~Moitra.
\newblock {Learning topic models -- going beyond SVD}.
\newblock In {\em FOCS}, pages 1--10, New Brunswick, NJ, {Oct.} 2012.

\bibitem{Arora2:ref}
S.~Arora, R.~Ge, Y.~Halpern, D.~Mimno, A.~Moitra, D.~Sontag, Y.~Wu, and M.~Zhu.
\newblock A practical algorithm for topic modeling with provable guarantees.
\newblock In {\em ICML}, Atlanta, GA, {Jun.} 2013.

\bibitem{DDP:ref}
W.~Ding, M.~H. Rohban, P.~Ishwar, and V.~Saligrama.
\newblock {Topic Discovery through Data Dependent and Random Projection}.
\newblock In {\em ICML}, Atlanta, GA, {Jun.} 2013.

\bibitem{Anan13:ref}
A.~Anandkumar, D.~Hsu, A.~Javanmard, and S.~Kakade.
\newblock Learning linear bayesian networks with latent variables.
\newblock In {\em ICML}, {Jun.} 2013.

\bibitem{Donhunique:ref}
D.~Donoho and V.~Stodden.
\newblock {When does non-negative matrix factorization give a correct
  decomposition into parts?}
\newblock In {\em {NIPS}}, pages 1141--1148, Lake Tahoe, NV, {Dec.} 2012.

\bibitem{recht2012factoring}
B.~Recht, C.~Re, J.~Tropp, and V.~Bittorf.
\newblock {Factoring nonnegative matrices with linear programs}.
\newblock In {\em NIPS}, pages 1223--1231, Lake Tahoe, NV, {Dec.} 2012.

\bibitem{Kumar13:ref}
A.~Kumar, V.~Sindhwani, and P.~Kambadur.
\newblock Fast conical hull algorithms for near-separable non-negative matrix
  factorization.
\newblock In {\em ICML}, Atlanta, GA, {Jun.} 2013.

\end{thebibliography}

\end{document}